\newtheorem{Lemma}{Lemma}
\newcommand{\LL}{\mathbf{L}}
\title{Determinantal Point Process as an alternative to NMS}
\begin{document}

\maketitle

\begin{abstract}
    We present a determinantal point process (DPP) inspired alternative to non-maximum suppression (NMS) which has become an integral step in all state-of-the-art object detection frameworks. DPPs have been shown to encourage diversity in subset selection problems~\cite{GongNIPS14}. We pose NMS as a subset selection problem and posit that directly incorporating DPP like framework can improve the overall performance of the object detection system. We propose an optimization problem which takes the same inputs as NMS, but introduces a novel sub-modularity based diverse subset selection functional. Our results strongly indicate that the modifications proposed in this paper can provide consistent improvements to state-of-the-art object detection pipelines.
\end{abstract}

\section{Introduction}
\label{sec:intro}
Object detection has gained a lot of momentum over the past few years, especially due to its application in a wide variety of fields such as autonomous driving, manufacturing industry, traffic and law enforcement~\cite{HaroonShah2018} applications. The primary approaches for object detection can be loosely divided into a few dominant approaches, including sliding-window Deformable Parts Models~\cite{Felzenszwalb2010,Zhu2010LatentHS}, region proposal with classification~\cite{GirshickDDM13,UijlingsIJCV2013}, and location regression with deep learning~\cite{Overfeat2014,Szegedy13deepneural}. Almost all of the current day object detection frameworks follow a three step process, namely:
(1) proposing a search space of windows, which has mostly converged to the output of a region proposal network (RPN), (2) scoring/
refining the window with a classifier/regressor, and (3)
merging or discarding windows that might belong to the same object.
This last stage is commonly referred to as ``non-maximum suppression'' (NMS)~\cite{GirshickDDM13,KaimingHe16,FRCNN_NIPS2015,Felzenszwalb2010,redmon2015unified,liu2015single}.

NMS is a fairly simple test time post-processing routine. Maintaining parity with some of the published research in this area, we denote the basic NMS step as GreedyNMS~\cite{Felzenszwalb2010,Rothe2014NonmaximumSF,Hosang2017cvpr} in this paper. The GreedyNMS algorithm, greedily selects high scoring detected windows and iteratively discards spatially close-by less confident neighbours with the assumption that the neighbors are likely to cover the same object. Specifically, all the candidate windows are either selected or rejected based on the following procedure: first, the highest-scored window is marked as retained, and all those overlapping with it by more than some threshold (e.g.~30\%) intersection-over-union (IoU) are marked as suppressed; then, the next highest-scored window neither retained nor suppressed is marked as retained, and again all others sufficiently-overlapping candidate windows are marked for rejection. This process is repeated until all windows are marked as either retained or suppressed. The retained windows then constitute the final set of detected proposals.
Although GreedyNMS continues to be the method of choice due to its simplicity, it inherently suffers from significant conceptual shortcomings. GreedyNMS is based on the simple intuition that similar detection windows which are close in spatial sense, should be suppressed. It controls the influence span by a single threshold parameter which is chosen to keep the region of suppression not too wide, since a wide suppression would remove close-by high scoring detected windows that are likely to be false positives that hurt precision. If objects are indeed close to each other, such as persons in crowded scenes, then the windows detected close to each other should be counted as true positives, in which case suppression should be narrow to improve recall. Achieving both these targets with a single tuning parameter seems hard and indeed this inherent limitation is the biggest shortcoming of the GreedyNMS routine.

One of the seminal works in general object detection was the R-CNN model by ~\citet{GirshickDDM13}, which replaced the feature extraction and classifier pipeline by a neural network, resulting in almost two times performance gain on Pascal VOC. Another significant improvement was the F-RCNN model by~\citet{FRCNN_NIPS2015}, which absorbed the object proposal generation into the network, while YOLO~\cite{redmon2015unified} avoided proposals altogether, leading to both speed and quality improvements. A general trend towards end-to-end trainable object detection models has been the norm in recent times. NMS is one step in the object detection pipeline that is based on post-processing. Though a few works have tried to incorporate end-to-end trainable pipelines~\cite{Hosang2017cvpr,WanCVPR2015}, so far it is not widely accepted. We would like to retain the post-processing nature of NMS in order for our approach to be incorporated in any pipeline.

In this work, we propose a principled improvement of the core NMS step by incorporating a DPP cost function in it. This development leads to an overall improvement of the NMS step and can be incorporated to existing NMS implementation with minimal changes. The theoretical guarantees afforded by a DPP based cost function lets us bridge the aforementioned gaps in fundamental ways, namely:
\begin{itemize}
    \item We improve the performance of NMS staying in the standard flow, wherein NMS still stays outside the main neural loop in state-of-the-art (SOTA) object detection implementations,
    \item The proposed system does not need any additional training as in~\cite{Hosang2017cvpr,Azadi_2017_CVPR} or modification of standard cost functions as in~\cite{WanCVPR2015}.
    \item the proposed system works with the same inputs as NMS, namely proposal windows and their score, and introduces a new way to select diverse proposal subsets.
\end{itemize}

\section{Related Work}
Wan et al.~\cite{WanCVPR2015}, proposed to integrate the NMS cost function into the unified loss function of a joint optimization system which had a neural featurizer, a deformable parts model and an NMS block. Since the NMS block was outside the neural loop, this implementation was similar to GreedyNMS, albeit with application dependent loss function. This work mentioned faster RCNN based models but did not use them and hence the baseline is considerably lower than the current day works.  Hosang et al.~\cite{Hosang2017cvpr}, propose to absorb the entire NMS step into a neural network. The authors claim that the suppression width parameter can be better estimated by a neural net and hence it should be data dependent rather than an empirically chosen one. Even though this argument has merit, the adoption in state-of-the-art algorithms is still missing. Azadi et al.~\cite{Azadi_2017_CVPR} propose a similar method, where they use DPP as an alternative to NMS. However, in their method DPP is implemented as a trainable layer and not as a simple plug and play module.

Informative subset selection problems arise in many applications where a small number of items must be chosen to represent or cover a much larger set; for instance, text summarization~\cite{nenkova2006a,Lin10multidocumentsummarization}, document and image search~\cite{radlinski2008learning,Yue2008PredictingDS,KuleszaFixedDPPICML2011}, sensor placement~\cite{Guestrin2005}, viral marketing~\cite{KempeLNCS05}, and many others. Recently, probabilistic models extending determinantal point processes (DPPs)~\cite{macchi1975,TPPDaley} were proposed
for several such problems~\cite{KuleszaNIPS2010,KuleszaFixedDPPICML2011,GillenwaterEMNLP2012}. DPP was first used to characterize the Pauli exclusion principle, which states that two identical particles cannot occupy the same quantum state simultaneously~\cite{macchi1975}. DPPs offer computationally attractive properties, including exact and efficient computation of marginals~\cite{macchi1975}, sampling~\cite{hough2006,KuleszaFixedDPPICML2011}, and (partial) parameter estimation~\cite{KuleszaUAI2011LDP}. DPP has emerged as a powerful method for selecting a diverse subset from a ``ground set'' of items~\cite{KulTasDPP_1}.

\subsection{Determinantal Point Processes}
To define a determinantal point process (DPP) let us first consider the definition of a point process itself. A point process $\mathcal{P}$ on a ground set $\mathcal{Y}$ refers to a probability distribution on finite subsets of $\mathcal{Y}$. Let $\mathcal{Y}$ be a discrete set represented as $\mathcal{Y}=\{1,2,\dots,N\}$, then $\mathcal{P}$ defines a probability distribution on $2^\mathcal{Y}$, the powerset of $\mathcal{Y}$.

For $\mathcal{P}$ to be called a determinantal process, it should satisfy the following condition for all $A\subseteq\mathcal{Y}$:
\begin{equation}
    \mathcal{P}(A\subseteq\mathbf{Y})=\det(\textbf{K}_A)
\end{equation}
where, $\mathbf{Y}$ is a random subset drawn according to $\mathcal{P}$, $\textbf{K}$ is a real, symmetric $N\times N$ matrix indexed by the elements of $\mathcal{Y}$, and $\textbf{K}_A$ is the submatrix obtained from $\textbf{K}$ when only the entries indexed by elements of $A$ are considered. $\textbf{K}$ is referred to as the marginal kernel.

The above definition of DPP defines $\mathcal{P}$ in terms of marginal probabilities using $\textbf{K}$. There exists an alternative definition for a slightly restricted class of DPPs which allow us to model the probability of a subset directly. These are known as L-ensembles~\cite{BrunelMoitraStats2017} and are much easier to work with practically. We define $\mathcal{P}$ using L-ensembles as follows:
\begin{equation}
    \mathcal{P}_L(\mathbf{Y}=Y)\propto\det(\LL_Y)
\end{equation}
where, $\mathbf{Y}$ represents the random variable as earlier, $\LL$ is a real, symmetric $N\times N$ matrix indexed by elements of $\mathcal{Y}$, and $\LL_Y$ is similarly the submatrix of $\LL$ indexed by elements of $Y$. To satisfy the fact that probability measures must always be positive, $\LL$ has to be positive semidefinite (psd). The normalization constant for $\mathcal{P}$ can be obtained in closed form since
\begin{equation}
    \sum_{Y\subseteq\mathcal{Y}}\det(\LL_Y)=\det(\LL+\textbf{I})
\end{equation}
Thus, using L-ensembles we get a direct probability distribution on the subsets of $\mathcal{Y}$ as:
\begin{equation}
    \mathcal{P}_L(\mathbf{Y}=Y)=\frac{\det(\LL_Y)}{\det(\LL+\textbf{I})}
    \label{Eq:LEnsembles}
\end{equation}
Exact MAP inference of DPP is a NP-hard problem~\cite{EAM1995}. However, approximation of the DPP formulation, notably,
\begin{equation}\label{Eq:logDetDPP}
    f(S) = \log \det \LL_Y
\end{equation}
is a non-monotone submodular function~\cite{KulTasDPP_1}, which has been the function of choice for most of the work in this domain~\cite{GongNIPS14,feldman2018less}.

\section{Method}

We propose replacing GreedyNMS in detection pipelines with a DPP proposed in Eq.~\ref{Eq:logDetDPP}. Generally in a detection pipeline NMS is applied on final detections to filter them and keep only one detection per object. Faster RCNN, not only performs NMS on the final detections but also on the region proposals returned by the Region Proposal Network (RPN). We posit that the NMS after the RPN stage would gain with diversified selection, since its task is to retain all the informative regions. The second NMS which comes after the softmax stage just filters the boxes obtained for each class independently and hence does not gain with diversity preserving methods. Consequently, we replace the first stage NMS after the RPN layer in this work. As such it is here that we apply DPP. The basic idea is to use DPP to select or filter the proposals instead of NMS. Thus our ground set $\mathcal{Y}$ consists of the proposals returned by the RPN. GreedyNMS uses the box coordinates to compute an intersection over union metric and also the score provided by the RPN to filter the windows. We use the exact same two features for our method. To construct our $L$ matrix we make use of 2 features.
\begin{itemize}
    \item Scores for the proposals from the RPN ($s_i$)
    \item Intersection over union (IoU) of the proposals ($\mathrm{IoU}_{ij}$)
\end{itemize}
where $\{i \in \mathcal{Y}\}$. These features are then combined to form the $\LL$ matrix given by,
\begin{equation}\label{Eq:simEqMatr}
    \LL = \alpha [e^\mathbf{s} e^{\mathbf{s}^T}] \odot \mathbf{IoU}
\end{equation}
whose elements are written as follows:
\begin{equation}
    L_{ij} = \alpha e^{s_i}\mathrm{IoU}_{ij}e^{s_j}
    \label{Eq:simEq}
\end{equation}
where $\alpha > 1$ is a scaling constant provided to bias the selection process towards selecting larger subsets, and the values of $s_i\in (0,1)\forall i \in \mathcal{Y}$, $\mathbf{s}$ is a column vector with $s_i$ as its $i^{th}$ element, $e^{\mathbf{s}}$ represents the element-wise exponentiation of $\mathbf{s}$, $\mathbf{IoU}$ is a matrix composed of $\mathrm{IoU}_{ij}$, and $\odot$ represents the Hadamard product of matrices. Note that the interaction of the two score $s_i$ and $s_j$ can be combined in many different ways. In this work we use the exponent function to bring it closer to the smooth maximum approximation, along with the large weighting constant $\alpha$\footnote{https://en.wikipedia.org/wiki/Smooth\_maximum}.

\begin{Lemma}
    $\LL = \alpha [e^\mathbf{s} e^{\mathbf{s}^T}] \odot \mathbf{IoU}$
    is positive semidefinite.
\end{Lemma}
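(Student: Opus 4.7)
The plan is to invoke the Schur product theorem, which states that the Hadamard (entrywise) product of two positive semidefinite (PSD) matrices is again PSD, together with the fact that multiplication by a positive scalar preserves PSD-ness. Since $\alpha > 0$, it then suffices to establish two subfacts: (i) $e^{\mathbf{s}} e^{\mathbf{s}^T}$ is PSD, and (ii) $\mathbf{IoU}$ is PSD. Combining them yields the claim immediately.

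The first subfact is essentially a one-line check. The matrix $e^{\mathbf{s}} e^{\mathbf{s}^T}$ is the rank-one outer product of the vector $e^{\mathbf{s}}$ with itself, and for any $\mathbf{x} \in \mathbb{R}^N$ one has $\mathbf{x}^T \bigl(e^{\mathbf{s}} e^{\mathbf{s}^T}\bigr) \mathbf{x} = \bigl(e^{\mathbf{s}^T} \mathbf{x}\bigr)^2 \geq 0$, so the outer-product factor is PSD.

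The main obstacle is the second subfact: showing that $\mathbf{IoU}$, whose entries are the Jaccard similarities $|A_i \cap A_j|/|A_i \cup A_j|$ of the proposal boxes (viewed as finite pixel sets), is PSD. This is a classical but non-trivial fact, and I would prove it by exhibiting an explicit feature representation. The cleanest route is the MinHash construction: for a uniformly random permutation $\pi$ of the underlying pixel index set, $\mathbb{P}_{\pi}\bigl[\min \pi(A_i) = \min \pi(A_j)\bigr] = |A_i \cap A_j|/|A_i \cup A_j|$. Hence each entry of $\mathbf{IoU}$ can be written as an expectation of $\phi_\pi(A_i)^{\top} \phi_\pi(A_j)$ for the one-hot feature map $\phi_\pi(A) = e_{\min \pi(A)}$, so $\mathbf{IoU}$ is a nonnegative combination of rank-one Gram matrices, and thus PSD. (Alternatively, one may cite the known result that the Jaccard distance $1 - J$ is of negative type, equivalently embeds in $L_1$, which by Schoenberg's theorem yields PSD-ness of $J$.)

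Combining (i) and (ii) via the Schur product theorem gives $\bigl[e^{\mathbf{s}} e^{\mathbf{s}^T}\bigr] \odot \mathbf{IoU} \succeq 0$, and multiplying by the positive scalar $\alpha$ preserves this, so $\LL \succeq 0$ as claimed.
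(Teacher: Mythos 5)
Your proposal is correct and follows essentially the same route as the paper's proof: decompose $\LL$ into the rank-one outer product $e^{\mathbf{s}}e^{\mathbf{s}^T}$ and the Jaccard/IoU matrix, argue each factor is positive semidefinite, and conclude via the Schur product theorem (plus positivity of $\alpha$). The only difference is that where the paper simply cites \cite{bouchard2013proof} for the positive (semi)definiteness of the Jaccard similarity matrix, you supply an actual argument (the MinHash feature-map construction, or equivalently the negative-type/Schoenberg route), which makes your write-up more self-contained but does not change the structure of the proof.
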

\begin{proof}
    The constituents of the $\LL$ matrix in the above manner can be proven to be individually positive semidefinite by the following three arguments. a) $e^\mathbf{s} e^{\mathbf{s}^T}$ is positive semidefinite since it is of the form $\mathbf{x}\mathbf{x}^T$, b) The $\mathbf{IoU}$ matrix, also known as the Jaccard similarity matrix, can be shown to be positive \textbf{definite} \cite{bouchard2013proof}, and c) According to the Schur product theorem\footnote{https://en.wikipedia.org/wiki/Schur\_product\_theorem}, the Hadamard product (elementwise multiplication product) of two positive semidefinite matrices is also positive semidefinite. Thus, the product $[e^\mathbf{s} e^{\mathbf{s}^T}] \odot \mathbf{IoU}$ is also positive semidefinite.

\end{proof}
The final probability of a selecting $Y\subseteq\mathcal{Y}$ can now be written as:
\begin{equation}
    \begin{split}
        \mathcal{P}(\mathbf{Y}=Y) &\propto \det(\alpha [e^{\mathbf{s}_Y} e^{\mathbf{s}_Y^T}]\odot \mathbf{IoU}_Y) = \alpha^{|Y|}\det( [e^{\mathbf{s}_Y} e^{\mathbf{s}_Y^T}]\odot\mathbf{IoU}_Y)
    \end{split}
    \label{Eq:FinalProbSel}
\end{equation}
Note that due to the determinant operation, the weighting term $\alpha$ is raised to the power $|Y|$, which is the size of the subset to be selected. Explicitly making the subset size influence the probability is important since the marginal gain decreases with increase in subset size. Hence, the weighting term acts as a counter to the diminishing marginal gain, which is due to the sub-modular nature of the objective function.

To obtain the set which maximizes the above probability we need to use some approximation technique. One choice is the simple greedy method. Before arriving at the final formulation we need the following lemmas.

\begin{Lemma}
    The principle sub-matrices of a psd matrix are also psd.
\end{Lemma}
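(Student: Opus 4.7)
The plan is to work directly from the quadratic-form characterization of positive semidefiniteness: an $n\times n$ symmetric matrix $M$ is psd iff $x^T M x \ge 0$ for every $x\in\mathbb{R}^n$. I would fix a principal submatrix $M_A$ obtained by selecting the rows and columns indexed by a subset $A\subseteq\{1,\dots,n\}$ (with $|A|=k$), and verify the same quadratic-form inequality for $M_A$ on $\mathbb{R}^k$.

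The key step is the zero-extension trick. Given an arbitrary $y\in\mathbb{R}^k$, I would define $\tilde{y}\in\mathbb{R}^n$ by placing the entries of $y$ in the coordinates indexed by $A$ and setting all remaining coordinates to $0$. A direct expansion of the bilinear form then shows that only the entries of $M$ lying in rows and columns indexed by $A$ contribute, so
\begin{equation}
y^T M_A y \;=\; \tilde{y}^T M \tilde{y}.
\end{equation}
Since $M$ is psd by hypothesis, the right-hand side is nonnegative, hence so is the left-hand side. Because $y$ was arbitrary, $M_A$ is psd.

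Symmetry of $M_A$ needs a brief remark: it is inherited from $M$ because $(M_A)_{ij} = M_{a_i a_j} = M_{a_j a_i} = (M_A)_{ji}$ for any enumeration $a_1,\dots,a_k$ of $A$. Once this and the quadratic-form inequality are in hand, the conclusion follows immediately from the definition.

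There is essentially no obstacle here: the entire argument is a one-line zero-padding reduction to the hypothesis, and it is exactly the tool I will need later to conclude that every $\LL_Y$ in Eq.~\ref{Eq:LEnsembles} is psd (so that $\det(\LL_Y)\ge 0$ and the induced probability in Eq.~\ref{Eq:FinalProbSel} is well-defined) once the preceding lemma has established that the full matrix $\LL$ is psd.
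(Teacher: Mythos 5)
Your proof is correct: the zero-extension argument (padding $y\in\mathbb{R}^k$ with zeros outside the index set $A$ so that $y^T M_A y = \tilde{y}^T M \tilde{y} \ge 0$) is the standard and complete way to establish this lemma, and your remark on inherited symmetry is a sensible inclusion. The paper itself states this lemma without any proof, so there is nothing to compare against; your argument fills that gap exactly as one would expect.
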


According to this lemma any principle submatrix of $\LL$ indexed by the set $Y$ is also positive semidefinite. Hence, $\LL \succeq 0$ leads to all subsets $\LL_Y \succeq 0$.
\begin{Lemma}
    $\log \det \LL_Y$ for a psd matrix $\LL_Y$ is submodular.
    \label{Lemma3}
\end{Lemma}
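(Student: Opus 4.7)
The plan is to establish submodularity directly from the definition: for all $A \subseteq B \subseteq \mathcal{Y}$ and $i \notin B$, show that the marginal gain satisfies
\[
\log\det \LL_{A \cup \{i\}} - \log\det \LL_A \;\geq\; \log\det \LL_{B \cup \{i\}} - \log\det \LL_B.
\]
The main tool I would use is the Schur complement formula for block determinants. Writing $\LL_{Y \cup \{i\}}$ with the row/column corresponding to $i$ split off as a block, one obtains
\[
\det \LL_{Y \cup \{i\}} \;=\; \det \LL_Y \cdot \bigl( L_{ii} - \LL_{iY}\LL_Y^{-1}\LL_{Yi}\bigr),
\]
assuming $\LL_Y$ is invertible. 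Taking logs, the marginal gain of $i$ over $Y$ equals $\log\bigl(L_{ii} - \LL_{iY}\LL_Y^{-1}\LL_{Yi}\bigr)$. The submodularity inequality then reduces to showing that this Schur complement is monotonically non-increasing in $Y$.

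The cleanest way I would argue this monotonicity is geometric. Since $\LL$ is psd, write $\LL = B^T B$ for some real matrix $B$, and let $\mathbf{b}_j$ denote its $j$-th column. Then $\det \LL_Y = \det(B_Y^T B_Y)$ is the squared volume of the parallelepiped spanned by $\{\mathbf{b}_j : j \in Y\}$. By the base-times-height formula,
\[
\log\det \LL_{Y \cup \{i\}} - \log\det \LL_Y \;=\; 2\log\, d\bigl(\mathbf{b}_i,\, \mathrm{span}\{\mathbf{b}_j : j \in Y\}\bigr),
\]
where $d(\cdot,\cdot)$ is Euclidean distance. Since $A \subseteq B$ implies $\mathrm{span}\{\mathbf{b}_j : j \in A\} \subseteq \mathrm{span}\{\mathbf{b}_j : j \in B\}$, the distance from $\mathbf{b}_i$ to the latter is no larger than to the former. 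Taking logs preserves the inequality, yielding exactly the required decreasing-marginal-gain property.

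The main obstacle, and the reason I prefer the geometric formulation over a purely algebraic chase, is the degenerate case when some $\LL_Y$ is singular, so that $\log\det \LL_Y = -\infty$ and the Schur complement identity breaks down. I would address this either by restricting attention to the positive definite case (which is the regime actually used in the detection pipeline) and then extending by continuity via $\LL + \varepsilon I$ and letting $\varepsilon \to 0^+$, or by adopting the standard convention $\log 0 = -\infty$ and checking that the submodular inequality still holds vacuously whenever any of the four terms is $-\infty$. A secondary but routine check is the edge case $Y = \emptyset$, for which $\det \LL_{\emptyset} := 1$ and the marginal gain formula reduces to $\log L_{ii}$, which is consistent with the geometric interpretation $2\log \|\mathbf{b}_i\|$.
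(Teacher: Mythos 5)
Your proposal is correct and is essentially the same argument the paper relies on: the paper's proof simply cites ``the geometrical argument'' of Kulesza and Taskar, and your volume/base-times-height derivation (with $\LL = B^T B$ and the marginal gain equal to $2\log d(\mathbf{b}_i, \mathrm{span}\{\mathbf{b}_j : j \in Y\})$) is precisely that argument, written out in full with the degenerate cases handled.
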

\begin{proof}
    Submodularity of DPPs can be established by the geometrical argument as shown in~\cite{KulTasDPP_1}.
\end{proof}

Connecting all the lemmas, we can claim that all principal submatrices of $\mathbf{L} \succeq 0$ are themselves $\mathbf{L}_Y \succeq 0$. Finally, invoking Lemma.~\ref{Lemma3} and extending it to the current setting, we can maximize $\log\det \LL_Y$ to obtain the approximate MAP set. As such the final formulation for DPP based NMS is given by:
\begin{equation}\label{Eq.minFunc}
    \begin{split}
        \arg \max_Y~~ &\log\det \LL_Y = \log\det (\alpha [e^{\mathbf{s}_Y} e^{\mathbf{s}_Y^T}] \odot \mathbf{IoU}_Y) \\
    \end{split}
\end{equation}

We employ a greedy algorithm to maximize this cost function, where at every iteration we add the element which has the highest marginal gain with respect to the currently selected set. While greedy algorithms are not optimal in general, for monotone sub-modular problems they have well-defined approximation bounds~\cite{KulTasDPP_1}.
Our final algorithm is given as follows:

\begin{algorithm}
    \parbox{.48\linewidth}{
        \centering
        \begin{algorithmic}
            \STATE \textbf{Input:} RPN proposals $\mathcal{Y}$, RPN scores $\mathbf{s}$, parameter $\alpha$, maximum boxes $k$
            \STATE \textbf{Output:} Filtered proposals $Y$
            \STATE Compute $\mathbf{IoU}$ matrix using $\mathcal{Y}$
            \STATE $\LL\leftarrow\alpha[e^\mathbf{s} e^{\mathbf{s}^T}] \odot \mathbf{IoU}$
            \STATE $Y\leftarrow\mathrm{Greedy}(\mathcal{Y},k)$
            \STATE \textbf{return} $Y$
        \end{algorithmic}
    }
    \hfill
    \parbox{.48\linewidth}{
        \centering
        \begin{algorithmic}
            \STATE \textbf{Function} $\mathrm{Greedy}(\mathcal{Y},k)$\textbf{:}
            \STATE $X\leftarrow\mathcal{Y}, Y\leftarrow\emptyset$
            \WHILE{$|Y|<k$}
            \STATE $e\leftarrow\max_{i\in X}f(Y\cup i) - f(Y)$ ~~ Eq.\ref{Eq.minFunc}
            \IF{$f(Y\cup e) - f(Y)\leq0$}
            \STATE \textbf{return} $Y$
            \ENDIF
            \STATE $Y\leftarrow Y\cup e$
            \STATE $X\leftarrow X\setminus e$
            \ENDWHILE
            \STATE \textbf{return} $Y$
        \end{algorithmic}
    }
\end{algorithm}

We utilise a heap-based implementation to speed up the algorithm as proposed by~\citet{Minoux1978AcceleratedGA}. The additional check for positivity of the marginal gain in the greedy algorithm, ensures that the value of our currently selected set always increases at every iteration.


\section{Experiments and Results}
In this section we provide details about the experiments performed and discuss the various results obtained. We work with a standard PyTorch\footnote{https://pytorch.org/} version of faster-RCNN\footnote{https://github.com/jwyang/faster-rcnn.pytorch} and use VGG-16 as the backbone network. We maintain all the default settings to make the experiments as reproducible as possible. All our experiments are subsequently based on replacing the NMS module after the RPN block, with our own proposed method. We perform experiments on MS-COCO~\cite{MSCOCO} and PASCAL VOC 2007~\cite{PASCALVOC} datasets. 
In all cases we train the network for 6 epochs on the default training splits, which are mentioned in the respective dataset subsections. During training we do not use DPP. We replace the NMS module with DPP during test time. 
We believe that the merit of existing GreedyNMS is its simplicity and the fact that it does not need to be tuned much for any experiment. Consequently, we propose a similar setting where the default parameter configuration works well for most applications. We evaluate a few variants of our model to understand the different modes of its operation and then converge onto one model with default parameter recommendation.
The models in the experiments are as follows:
\begin{itemize}
    \item $\mathrm{gNMS}_x$: This is the standard Greedy NMS algorithm with a maximum of $x=\{300, 400\}$ selected windows. Note that $\mathrm{gNMS}_{300}$ is the default setting in most SOTA object detection pipelines with GreedyNMS.
    \item DPP$^\alpha_x$: This refers to DPP with bias factor $\alpha=5$, (Eq.\ref{Eq:simEq}), with a maximum of $x=\{300, 400\}$ selected boxes.
\end{itemize}
For all of the above models the number of input proposals (the ones returned by the RPN) are limited to a maximum of $|\mathcal{Y}|=6000$ windows. We present comparison against the previous works which are most similar to our in spirit. \textbf{Neural-NMS} represents the deep network based NMS proposed by Hosang et al.~\cite{Hosang2017cvpr}. They train their own deep network to replace Greedy NMS and plug it in after the detection step of Faster RCNN. This is a deviation from the generic way of using NMS, where it is plugged after the RPN but before the detection stage. 
\textbf{MP-NMS} refers to the message passing based NMS algorithm proposed by Rothe et al.~\cite{Rothe2014NonmaximumSF}. We also compare against the end to end integration of convolution network, deformable parts model and NMS into one unified pipeline, proposed by Wan et al.~\cite{WanCVPR2015}. Though this method, denoted as \textbf{CN-DPM-NMS}, does not use F-RCNN like network, but the results can still work as a baseline comparison. Finally, \textbf{LDDP} refers to the pipeline proposed by Azadi et al.~\cite{Azadi_2017_CVPR} where they use a trainable DPP layer as an alternative to NMS.
All experiments were performed on a system with a i7-6850k CPU, a GTX 1080 Ti GPU and 64GB RAM. We implement DPP in C++ using the Eigen3 framework and run it on the CPU. When compared to a basic C++ CPU implementation of NMS we get comparable runtime upto approximately 100 selections for which NMS takes about 0.3s/image whereas DPP takes about 0.5s/image. The runtime of DPP however scales significantly with the number of selected proposals since the complexity involved is approximately $O(k^4)$, where $k$ is the number of proposals selected.


\subsection{MS-COCO}
For the MS-COCO dataset the model was trained on the training and valminusminival data splits and was tested on the minival split. 
In the results AP$_{0.5}$ represents average precision (AP) calculated considering 50\% overlap with ground truth. AP$_{0.5}^{0.95}$ represents AP averaged over multiple overlap thresholds ranging from 50\% to 95\% in steps of 5\%. The results for multi-class classification are shown in Table.~\ref{tab:nms_vs_dpp_ms_coco}. Results for MS-COCO person detection class has been reported by several authors and hence we also report it separately in Table.~\ref{tab:nms_vs_dpp_ms_coco_person}.

\begin{table}
    \parbox{.48\linewidth} {
        \centering
        \begin{tabular}{|l|c|c|}
            \hline
            Model                                 & AP$_{0.5}$ & AP$_{0.5}^{0.95}$ \\
            \hline\hline
            \bf{gNMS}$_{300}$                     & 47.7       & 27.3              \\
            \bf{gNMS}$_{400}$                     & 48.0       & 27.4              \\
            \bf{DPP$_{300}^{5}$}                  & 47.8       & 27.4              \\
            \bf{DPP$_{400}^{5}$}                  & \bf{48.1}  & \bf{27.5}         \\
            \bf{Neural-NMS~\cite{Hosang2017cvpr}} & -          & 24.3              \\

            \bf{LDDP~\cite{Azadi_2017_CVPR}}      & 32.2       & 15.5              \\
            \hline
        \end{tabular}
        \caption{NMS vs DPP experiments on MS COCO (All Classes)}
        \label{tab:nms_vs_dpp_ms_coco}
    }
    \hfill
    \parbox{.48\linewidth}{
        \centering
        \begin{tabular}{|l|c|c|}
            \hline
            Model                                 & AP$_{0.5}$ & AP$_{0.5}^{0.95}$ \\
            \hline\hline
            \bf{gNMS}$_{300}$                     & 69.8       & 40.2              \\
            \bf{gNMS}$_{400}$                     & 70.0       & 40.3              \\
            \bf{DPP$_{300}^{5}$}                  & 69.9       & 40.3              \\
            \bf{DPP$_{400}^{5}$}                  & \bf{70.2}  & \bf{40.6}         \\
            \bf{Neural-NMS~\cite{Hosang2017cvpr}} & 67.3       & 36.9              \\
            \hline
        \end{tabular}
        \caption{NMS vs DPP experiments on MS COCO (Persons)}
        \label{tab:nms_vs_dpp_ms_coco_person}
    }
\end{table}

\subsection{PASCAL VOC}
For PASCAL VOC 2007 we perform several experiments. We start off by evaluating Greedy NMS vs several variants of DPP over each class individually. 
For these experiments Faster-RCNN was trained on the training and validation sets and tested on the test set for PASCAL VOC 2007. For assigning proposed bounding boxes to ground truth detections PASCAL VOC considers overlaps greater than 50\% to be correct detections. This evaluation criteria is denoted as AP$_{0.5}$. Table.~\ref{tab:nms_vs_dpp_pascal_voc_full} shows the results of class wise performance. 
Average performance across all classes along with comparative methods are shown in Table.~\ref{tab:nms_vs_dpp_pascal_voc_full_avg}.

\begin{table}
    \begin{center}
        \begin{scriptsize}
            \begin{tabular}{|l|c|c|c|c|c|c|c|c|c|c|}
                \hline
                \scriptsize{Model}   & \tiny{aeroplane}   & \tiny{bicycle} & \tiny{bird}    & \tiny{boat}      & \tiny{bottle}  & \tiny{bus}         & \tiny{car}     & \tiny{cat}     & \tiny{chair}   & \tiny{cow}       \\
                \hline\hline
                \bf{gNMS}$_{300}$    & 67.66              & 77.39          & 67.15          & 54.36            & 54.43          & \textbf{78.40}     & \textbf{85.52} & {85.67}        & {48.45}        & \textbf{79.78}   \\
                \bf{gNMS}$_{400}$    & 68.18              & 77.96          & \textbf{67.89} & 54.61            & 54.72          & 78.17              & 85.50          & \textbf{85.98} & \textbf{48.61} & 79.73            \\
                \bf{DPP$_{300}^{5}$} & 69.93              & 77.22          & 65.75          & 54.79            & 55.43          & 78.25              & 85.05          & 82.40          & 47.93          & 80.36            \\
                \bf{DPP$_{400}^{5}$} & \textbf{69.94}     & \textbf{78.51} & 65.42          & \textbf{55.13}   & \textbf{55.49} & 77.90              & 85.29          & 83.53          & 48.01          & 78.20            \\
                \hline\hline
                \scriptsize{Model}   & \tiny{diningtable} & \tiny{dog}     & \tiny{horse}   & \tiny{motorbike} & \tiny{person}  & \tiny{pottedplant} & \tiny{sheep}   & \tiny{sofa}    & \tiny{train}   & \tiny{tvmonitor} \\
                \hline\hline
                \bf{gNMS}$_{300}$    & 61.50              & 78.89          & 82.14          & 75.61            & 77.26          & 40.65              & 70.42          & \textbf{63.77} & 74.94          & 72.19            \\
                \bf{gNMS}$_{400}$    & 61.10              & 78.59          & 82.32          & 75.39            & 77.23          & 40.96              & 70.16          & \textbf{63.77} & \textbf{75.28} & 71.76            \\
                \bf{DPP$_{300}^{5}$} & 63.35              & 80.97          & \textbf{83.10} & 75.67            & \textbf{77.60} & 42.22              & \textbf{71.45} & 63.71          & 74.89          & 72.61            \\
                \bf{DPP$_{400}^{5}$} & \textbf{63.91}     & \textbf{81.26} & 83.06          & \textbf{76.17}   & 77.54          & \textbf{42.63}     & 70.34          & 63.49          & 75.11          & \textbf{72.49}   \\
                \hline
            \end{tabular}
        \end{scriptsize}
    \end{center}
    \caption{NMS vs DPP experiments on PASCAL VOC 2007 (Classwise)}
    \label{tab:nms_vs_dpp_pascal_voc_full}
\end{table}
\begin{table}
    \begin{center}
        \begin{tabular}{|l|c|}
            \hline
            Model                              & AP$_{0.5}$     \\
            \hline\hline
            \bf{gNMS}$_{300}$                  & 69.81          \\
            \bf{gNMS}$_{400}$                  & 69.90          \\
            \bf{DPP$_{300}^{5}$}               & 70.13          \\
            \bf{DPP$_{400}^{5}$}               & \textbf{70.17} \\
            \bf{MP-NMS~\cite{RasmusMPNMS}}     & 56.14          \\
            \bf{CN-DPM-NMS~\cite{WanCVPR2015}} & 46.50          \\
            \bf{LDDP~\cite{Azadi_2017_CVPR}}   & 62.21          \\
            \hline
        \end{tabular}
    \end{center}
    \caption{Average performance on PASCAL VOC 2007}
    \label{tab:nms_vs_dpp_pascal_voc_full_avg}
\end{table}
\begin{figure}[ht!]
    \parbox{.48\linewidth}{
        \centering
        \includegraphics[width=\linewidth]{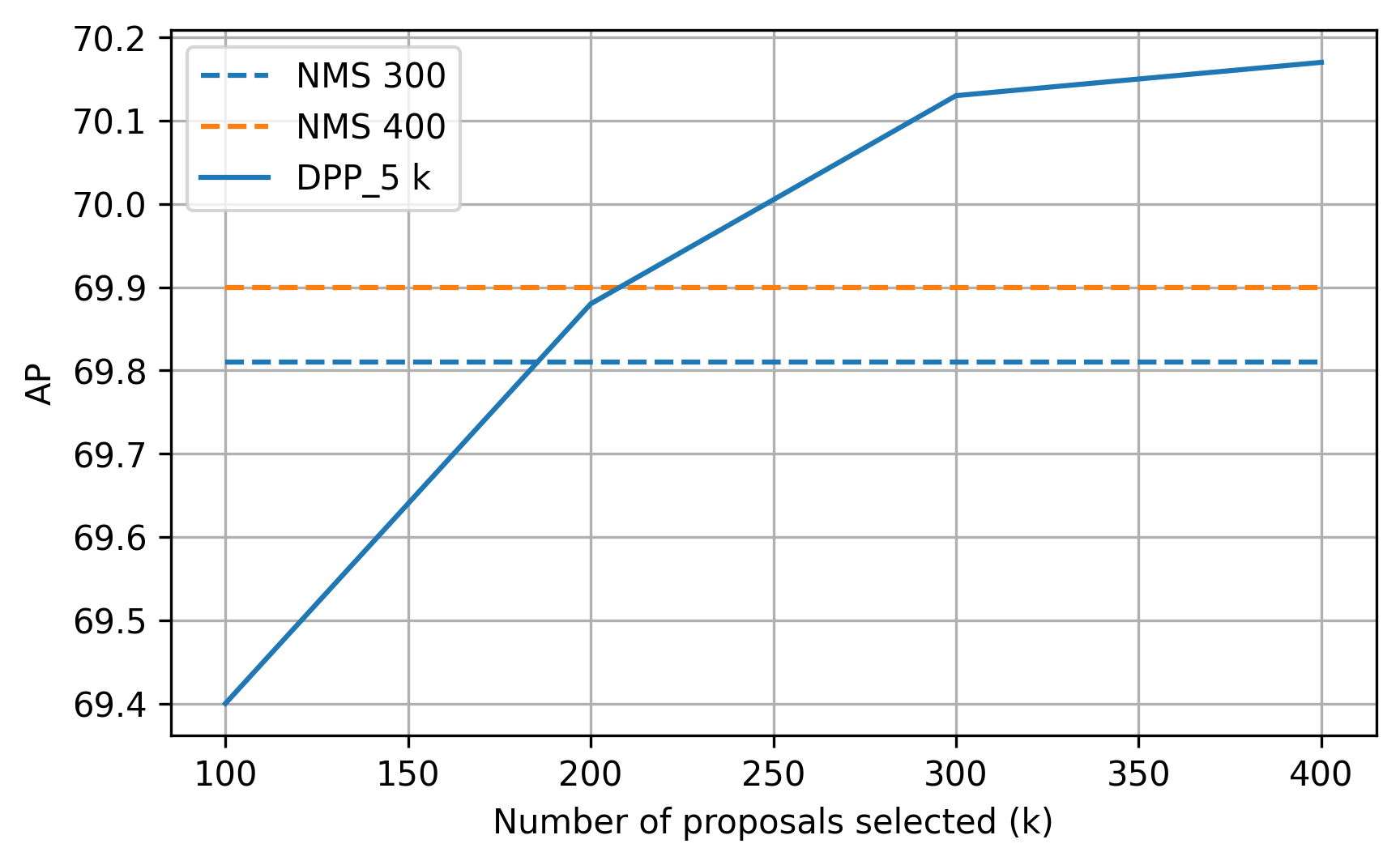}
        \caption{Comparison of varying the maximum window parameter $k$ in our algorithm.}
        \label{Fig:MaxWindow}
    }
    \hfill
    \parbox{.48\linewidth}{
        \centering
        \vspace{12pt}
        \includegraphics[width=\linewidth]{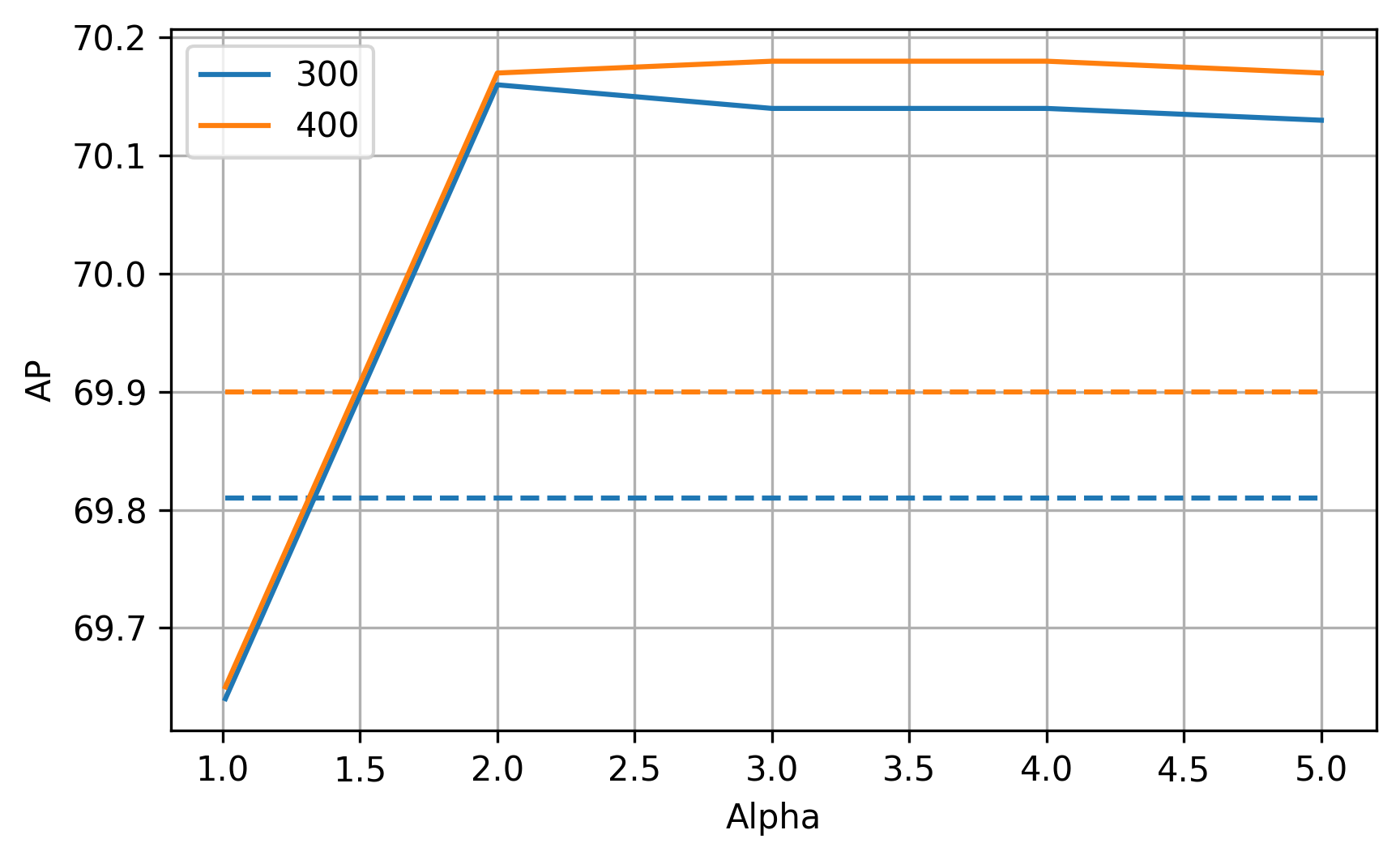}
        \caption{Comparison of varying the scaling parameter $\alpha$ in our algorithm. The horizontal dotted lines denote GreedyNMS.}
        \label{Fig:AlphaVar}
    }
\end{figure}
\subsection{Varying the maximum window and scaling parameters}
We perform more experiments to identify the core strengths of the proposed algorithm. The maximum number of windows returned by the algorithm is a parameter, which has a direct implication on the run-time of the algorithm. As such, the minimum value at which acceptable results are obtained needs to be selected. Keeping $\alpha = 5$, we run the algorithm with different values of $k\in\{100,200,300,400\}$. The results are shown in Fig.~\ref{Fig:MaxWindow}. Note that, for the setting $k=200$, our algorithm already beats gNMS$_{300}$ and is almost at par with gNMS$_{400}$. This is the key contribution of introducing diversified window selection in the NMS algorithm, wherein, a diverse set of lesser number of proposal windows ($k=200$) outperform a larger set of proposal windows ($k=300$) selected by GreedyNMS.

Similarly, we also perform experiments to observe the effect of the scaling parameter $\alpha$ on the detection performance. We test different values of $\alpha \in \{1.01, 2, 3, 4, 5\}$ while keeping the maximum number of windows $k$ fixed. The results are shown in Fig.~\ref{Fig:AlphaVar}. The proposed method beats GreedyNMS for $\alpha > 1.5$ for both 300 and 400 region proposal selections. 

\begin{figure}
    \centering
    \includegraphics[width=0.5\linewidth]{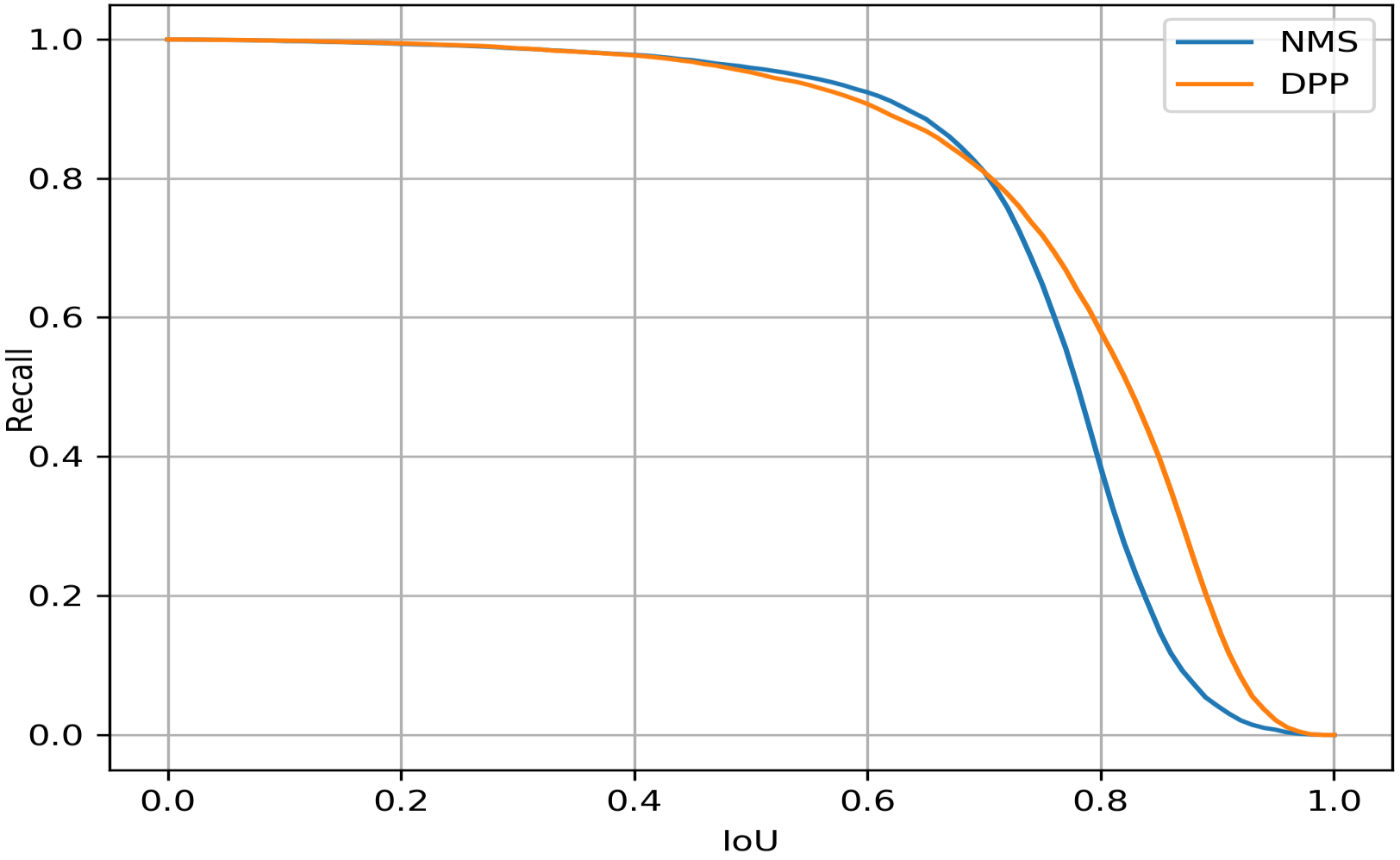}
    \caption{IoU vs Recall plot for gNMS$_{400}$ and DPP$_{400}^5$}
    \label{Fig:IoURecall}
\end{figure}

\subsection{IoU vs Recall}
In ~\cite{RasmusMPNMS} the authors propose evaluating the recall rate of detections at different IoU thresholds to measure how well fitting the selected bounding boxes are. We perform a similar evaluation, where we plot the recall with respect to the ground truth boxes against varying IoU thresholds (Fig.~\ref{Fig:IoURecall}). As NMS/DPP is applied on the RPN proposals in Faster-RCNN, we directly consider these proposals before any bounding box regression for this experiment. The IoU threshold determines whether a predicted bounding box is matched to a ground truth object or not. The AUC scores for the two curves are 0.7575 for Greedy NMS and 0.7869 for the DPP based method. 
In addition to having higher AUC we also note that the DPP based method becomes especially better when more precise bounding boxes are required ($IoU>0.7$). This indicates that DPP chooses better fitting bounding boxes than Greedy NMS.


\section{Qualitative Results}
We show a few qualitative results in Fig.~\ref{Fig:CocoComp1} using similar parameter settings as used for all the previous results. We select images from the MS-COCO validation set and plot the region boundaries found by the two competing methods, namely gNMS$_{400}$ and DPP$_{400}^{5}$. It is interesting to observe that DPP based selection works well when there is large overlap between two correct detections. DPP was able to remove some extraneous windows, such as the extra person detection for the tennis player blue cluster in Fig.~\ref{Fig:CocoComp1}. Similarly, it selects only meaningful windows for the collection of people in the bottom right image in the blue cluster. For images with very simple / few detections, both the methods perform at par. A few examples where NMS still performs better are shown in the green cluster in Fig.~\ref{Fig:CocoComp1}.

\begin{figure}[hbpt!]
    \centering
    \includegraphics[width=0.9\columnwidth]{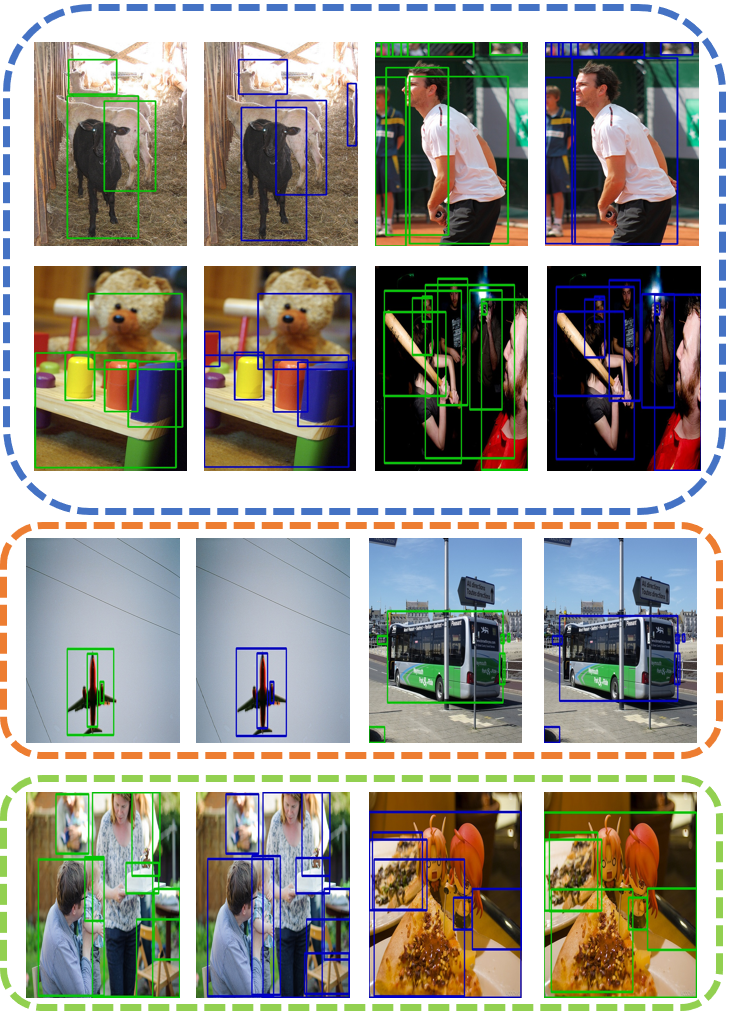}
    \caption{Qualitative results (best viewed in color). Blue boxes are produced by our method DPP$_{400}^{5}$, green boxes are produced by gNMS$_{400}$. Blue dotted cluster represents results where DPP$_{400}^{5}$ performs better than gNMS$_{400}$. Brown cluster represents similar performance. Green cluster represents cases where gNMS$_{400}$ seems to perform better, although the person detection is still superior for DPP$_{400}^{5}$.}
    \label{Fig:CocoComp1}
\end{figure}

\section{Conclusion and Future Work}
We propose a novel integration of DPP based diverse set selection technique into the NMS paradigm. We formulate a principled cost function which uses the same two features which the traditional NMS routines use, and show that this formulation can be driven to improve on NMS accuracy by carefully selecting the bias parameter $\alpha$ which promotes larger subsets. The comparative results against Greedy NMS as well as other recent methods prove that the proposed method is working at par or superior than most other methods.

\bibliography{egbib}
\end{document}